
\documentclass[letterpaper, 10 pt, conference]{ieeeconf}  

\IEEEoverridecommandlockouts                              

\overrideIEEEmargins                                      



\usepackage{graphicx} 
\usepackage{algorithm}
\usepackage{algpseudocode}
\usepackage{amsmath} 
\usepackage{amssymb}  
\usepackage{cite}
\newtheorem{theorem}{Theorem}[section]
\newtheorem{lemma}[theorem]{Lemma}
\usepackage{url}

\title{\LARGE \bf
Time-Ordered Ad-hoc Resource Sharing for
Independent Robotic Agents
}

\author{Arjo Chakravarty$^{1}$, Michael X. Grey$^{2}$, M. A. Viraj J. Muthugala$^{3}$ and Mohan Rajesh Elara$^{3}$ 
\thanks{This research is supported by Intrinsic llc., the National Robotics Programme under its National Robotics Programme (NRP) BAU, Ermine III: Deployable Reconfigurable Robots, Award No. M22NBK0054,  A*STAR under its “RIE2025 IAF-PP Advanced ROS2-native Platform Technologies for Cross sectorial Robotics Adoption (M21K1a0104)” programme, and also supported by SUTD Growth Plan (SGP) Grant, Grant Ref. No. PIE-SGP-DZ-2023-01.}
\thanks{$^{1}$Arjo Chakravarty is with Intrinsic LLC and Singapore University of Technology and Design, Singapore. 
        {\tt\small arjoc@intrinsic.ai, arjo\_chakravarty@mymail.sutd.edu.sg}}%
\thanks{$^{2}$Michael X. Grey is with Intrinsic LLC
        {\tt\small mxgrey@intrinsic.ai}}%
\thanks{$^{3}$M. A. Viraj J. Muthugala and Mohan Rajesh Elara are with Singapore University of Technology and Design, Singapore.
        {\tt\small \{viraj\_jagathpriya,rajeshelara\}@sutd.edu.sg}}%
}

\begin{document}

\maketitle
\thispagestyle{empty}
\pagestyle{empty}

\begin{abstract}

Resource sharing is a crucial part of a multi-robot
system. We propose a Boolean satisfiability problem (SAT) based approach to resource sharing.
Our key contributions are an algorithm for converting any
constrained assignment to a weighted-SAT based optimization.
We propose a theorem that allows optimal resource assignment problems to be solved via repeated application of a SAT solver.
Additionally we show a way to encode continuous time
ordering constraints using Conjunctive Normal Form (CNF). We
benchmark our new algorithms and show that they can be used
in an ad-hoc setting. We test our algorithms on a
fleet of simulated and real world robots and show that the
algorithms are able to handle real world situations. Our algorithms
and test harnesses are open source and build on Open-RMF’s fleet
management system.

\end{abstract}

\section{INTRODUCTION}

Multi-robot systems is a well studied field with a variety of sub problems. There has been extensive study of robot communications, joint exploration strategies \cite{9476870}, multi-agent path finding \cite{SHARON201540} \cite{kottinger2022conflict} and task allocation\cite{Gerkey02}. Open-RMF is an open source framework that facilitates multi-fleet orchestration \cite{OpenRMF}. It comes with tools for handling problems such as traffic management, map alignment and optimal task allocation \cite{Gerkey02}. Robotics researchers have spent much effort on optimizing multi-robot path planning\cite{SHARON201540}\cite{stern2019multi}, task assignment \cite{Gerkey02}\cite{NamNPHard} and multi-robot localization\cite{yu2020review}. However, as we move these systems to production one problem that arises is ad hoc (on-demand) physical resource contention\footnote{\url{https://github.com/open-rmf/rmf/discussions/83#discussioncomment-1123844}}. Suppose two or more robots intend to use the same item like a parking spot, charger, or an elevator. There needs to be a way for their use of the resources to be orchestrated. This paper proposes algorithms for a reservation system built upon the notion of optimizing the assignment of resources while taking into account individual costs for robots using a  Boolean satisfy-ability solver (SAT). When combined with traffic deconfliction, it allows lifelong operation of a multi-robot system. Our primary contributions in this paper are:
\begin{itemize}
    \item A Conjunctive Normal Form (CNF) formulation of the robot resource allocation problem
    \item A greedy algorithm for optimizing resource assignment
    \item An algorithm for converting any assignment problem and its SAT formulation into a cost optimization problem.
    \item A CNF encoding of the scheduling problem 
    \item An open source package integrated with ROS 2 and Open-RMF that handles resource assignment 
\end{itemize}
The rest of this paper is organised as follows:  Section \ref{sec:lit_review} covers a background literature review.  Section \ref{sec:ps_1} introduces the resource allocation problem for fixed time ranges. Section \ref{sec:cost_opt} compares a weighted SAT based approach to a greedy approach. Section \ref{sec:cost_opt} proposes and compares two algorithms: one based on SAT and another greedy method.  Section  \ref{sec:flexitime} extends the CNF formulation introduced in Section \ref{sec:ps_1} to support time ranges instead of fixed time and shows that it is faster than discretization.  Finally, Section \ref{sec:val} describes an experimental validation done in simulation and on physical robots.

\section{Background\label{sec:lit_review}}

One straightforward method to solve the resource assignment problem is to check if the robot's destination is currently occupied and wait for it to become available. This overly simplistic approach easily leads to deadlock, for example, if two robots each want to go to the location currently occupied by the other, then both are stuck waiting on the other to move.

The classic algorithm for resource assignment is the Khun-Munkres method \cite{munkres1957algorithms}. This method assumes there are $n$ resources and $m$ requesters. Each requester assigns a cost for a given resource, e.g. the distance the requester would need to travel from their current location to arrive at each resource. The algorithm will converge on the optimal set of resource assignments in a matter of $O(n^3)$ time \cite{munkres1957algorithms}. This method however does not account for scenarios where it is necessary for multiple requesters to take turns with one resource as is often the case in a realistic deployment. Additionally it is not possible to to encode constraints into the Kuhn-Munkres algorithm. This makes it unsuitable for our use as we often need to encode constraints like ``This robot must have access to a charger within the next two hours''.

It can be shown that the robot resource assignment problem has equivalent complexity to the Travelling Salesman Problem (TSP) making it NP-hard \cite{NamNPHard}. Several approaches for solving such problems exist such as using Linear Programming (LP) \cite{NamNPHard} or Constraint Programming (CP). Integer linear programming (ILP) is heavily favoured by many TSP solvers \cite{cook2014pursuit}, however fast ILP solvers such as Gurobi \cite{gurobi} and C-PLEX \cite{cplex2009v12} remain proprietary. Google's open source OR-tools use constraint programming solvers for such NP-hard optimizations \cite{cpsatlp_cpaior_masterclass}.

The approach we favour is similar to the approach used by OR-tools as there are many very mature open source SAT solvers available for use \cite{balyo2023proceedings}. The Conflict Driven Clause Learning algorithm allows SAT solvers to learn clauses which are impossible enabling us to limit the search space drastically \cite{cdcl1}\cite{zhang2001efficient}\cite{MRTASAT}.  It is easy to encode new constraints using CNF as compared to LP. This means that we know if a set of requested inputs is impossible even before trying to optimize the constraints. Similar approaches have been used by others, for instance Imeson and Smith use SAT and TSP solvers to solve integrated task and motion planning \cite{MRTASAT}. However, our approach is much simpler.

A form of the SAT problem which can encode optimization is the Weighted MaxSAT problem \cite{heras2008minimaxsat}. In the MaxSAT problem there are a few hard and a few soft constraints. The hard constraints are ones which cannot be violated, while the cost of the soft constraints is minimized with weights\cite{heras2008minimaxsat}. This is not unlike the problem we have at hand---in fact the formulation we present is one that could potentially be solved using MaxSAT solvers. However, for our formulation we do not have soft constraints thus significantly reducing the complexity and eliminating the need for MaxSAT.

Often CP-SAT solvers encode the costs of individual assignments themselves as CNF formulas \cite{cpsatlp_cpaior_masterclass}. This leads to the conundrum where floating point numbers cannot be represented and only integers can be used. In most cases this is rarely a problem as we can just multiply the costs by an arbitrarily large number \cite{cpsatlp_cpaior_masterclass}. For our problem however, thanks to the fact that only one alternative per request needs to be assigned (see section \ref{sec:ps_1} for definition of alternative), we can perform some simple tricks to speed up the search and limit our search space when dealing with assignment type problems. Additionally, we show that it is possible to express such floating point constraints in terms of total orders, thus enabling conventional SAT solvers to reason about them.

\section{Problem Statement}\label{sec:ps_1} 

The problem we wish to solve is one where a group of heterogeneous robots request access to a certain set of resources. For instance, robots may need to use charging stations, parking spaces, or elevators. When a robot needs to utilize some kind of resource, it proposes a set of acceptable alternative choices along with the cost of each alternative. For example if a robot needs to charge its battery and is compatible with three different charging stations, it would request an assignment to one of the three stations, listing the cost of going to each station from its current location (e.g. time or distance to travel).

\subsection{Definitions}
Given a set of resource requests, we assign resources across those requests based on the available alternatives in a way that minimizes the overall cost without having any assignments that overlap in time.
\begin{itemize}
    \item Each request can be defined as $R_i$.
    \item \label{discrete_formulation} A request has $n$ alternatives $\alpha_{i,j} = (s_{i,j},r_{i,j},d_{i,j}, c_{i,j})$ where $ s_{i,j},d_{i,j},c_{i,j} \in \mathbb{R}$. Each alternative is made of a start time $s$, a resource $r$, a cost $c$ and a duration $d$.
    \item An alternative $\alpha_{i,j}$ conflicts with another alternative $\alpha_{k,m}$ if it shares the same resource (i.e. $r_{i,j} = r_{k,m}$) and it overlaps in time with the other alternative.
\end{itemize}

\subsection{Simplification to CNF} 
The proposed problem can be simplified into a CNF. Suppose the variable $x_{i,j}$ represents the fact that the alternative $\alpha_{i,j}$ is awarded. We have two constraints:

First only one alternative from a request $R_i$ with $n$ alternatives can be awarded. This gives us the clauses shown in \eqref{c1} and \eqref{c2}
\begin{equation}
(x_{i, 0} \lor x_{i, 1} \lor x_{i, 2} ... \lor x_{i,n}) \label{c1}
\end{equation} 
\begin{equation}
\bigwedge_{j,k} (\neg x_{i,j} \lor \neg x_{i, k}), \text{where } j < k \label{c2}
\end{equation} 
Secondly, no items with conflicts can be assigned. So for every pair of conflicts $\alpha_{i,j}$, $\alpha_{i,m}$ we get the clause \eqref{c3}
\begin{equation}
 (\neg x_{i,j} \lor \neg x_{k, m}) \label{c3}
\end{equation} 

\section{Cost Optimization\label{sec:cost_opt}}

\subsection{Weighted-SAT Based Approach}
While SAT can provide a feasible solution, it does not provide an easy way to calculate cost. Repeated application of SAT is a common technique used when dealing with such optimizations \cite{heras2008minimaxsat}. In our case we can formulate our optimization in terms of the boolean variables as shown in \eqref{eq:weighted} where  $x_{i,j} \in \{0,1\}$ and $ c_{i,j} \in \mathbb{R} $.
\begin{equation}
\arg\underset{x_{i,j}}{\min}  \sum_{i,j} x_{i,j} c_{i,j} \label{eq:weighted}
\end{equation}

One of the simplest ways to solve this problem is to ask the SAT solver for SAT assignment, then negate the assignment and add it as a clause to the original. We keep doing this until the SAT solver returns "unsatisfiable" (UNSAT). At this point we can confirm that there are no more alternative assignments. Such a naive brute-force approach does not scale very well given a scenario where there are many feasible solutions. Even when we come across the optimal solution, we would need to continue performing an exhaustive search to eliminate all possible alternatives before we can prove that the candidate solution is optimal.

There is additional information in the problem domain that we can exploit to define necessary and sufficient conditions for determining whether a feasible solution is optimal without an exhaustive search. This motivates the following Lemma:

\begin{lemma}\label{lemma:1}
Given an assignment $A_1=(x_{0,j} , x_{1,l} .... x_{n, z})$ if a cheaper solution $A_2$ exists, then $\exists x_{i,j} \in A_2$ such that given $x_{i,k} \in A_1$, $c_{i,j} < c_{i,k}$.
\end{lemma}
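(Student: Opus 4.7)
My plan is to prove Lemma~\ref{lemma:1} by contraposition: assume no such request-index $i$ exists, and deduce that $A_2$ cannot be cheaper than $A_1$. The argument is essentially a one-line pigeonhole over the per-request cost contributions, but it depends crucially on invoking the CNF structure already set up in Section~\ref{sec:ps_1}.

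The first step is to exploit clauses \eqref{c1} and \eqref{c2}: together they force any satisfying assignment to award \emph{exactly} one alternative per request $R_i$. Hence I would regard $A_1$ and $A_2$ as functions on the shared index set $\{0,1,\ldots,n\}$ of requests, sending each $i$ to its chosen alternative. This makes the per-request correspondence between $A_1$ and $A_2$ canonical, and lets me write the total cost of either assignment as a sum $\sum_i c_{i,\cdot}$ over the same index set.

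Next, I would negate the conclusion: suppose that for \emph{every} request $i$, with $x_{i,j}\in A_2$ and $x_{i,k}\in A_1$ denoting the respective selected alternatives, we have $c_{i,j}\geq c_{i,k}$. Summing over $i$ immediately yields $\sum_i c_{i,j}\geq \sum_i c_{i,k}$, i.e.\ $\mathrm{cost}(A_2)\geq \mathrm{cost}(A_1)$, contradicting the hypothesis that $A_2$ is strictly cheaper. Therefore some index $i$ must witness $c_{i,j}<c_{i,k}$, which is exactly the claim.

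The only real ``obstacle'' is rhetorical rather than technical: I need to be explicit that \eqref{c1}--\eqref{c2} collapse every feasible assignment to a function on requests, so that the sum-of-inequalities step compares summands with matching indices. Were requests allowed to be skipped or served by multiple alternatives, one would have to partition the indices into matched and unmatched parts and handle the latter separately; here that complication does not arise.
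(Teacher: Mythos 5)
Your proposal is correct and follows essentially the same route as the paper: negate the conclusion, use that each request contributes exactly one selected alternative to each assignment, and sum the per-request inequalities $c_{i,j}\geq c_{i,k}$ to conclude $\mathrm{cost}(A_2)\geq \mathrm{cost}(A_1)$, contradicting that $A_2$ is cheaper. Your extra remark that clauses \eqref{c1}--\eqref{c2} make the per-request correspondence canonical is a slightly more explicit justification than the paper gives, but the argument is the same.
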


\begin{proof}
(by contradiction)

Suppose not. Given that there exist two solutions $A_2$ and $A_1$ where $A_2$ is cheaper than $A_1$. Then from the contradiction it follows that $\forall i, i < n, $ given $x_{i,j} \in A_2$ and $x_{i,k} \in A_1$, $c_{i,j} \geq c_{i,k}$. This leads to the total cost of the solution $cost(A_2) = c_{0,j_1}+ c_{1,j_2}+...+c_{n, j_n} \geq c_{0,k_1}+ c_{1,k_2}+...+c_{n, k_n} = cost(A_1)$. But this is a contradiction because we had defined $A_2$ to be cheaper than $A_1$. 
\end{proof}

Algorithm \ref{alg:heur} uses Lemma \ref{lemma:1} to formulate a new clause every time a solution is found. Given a SAT solution we add a clause to the formula that requires at least one of the cheaper alternatives to be true. This guides the solver towards the optimal solution faster than a brute force search. Additionally, as a result of Lemma \ref{lemma:1} we immediately know if a solution is optimal. This is because if a cheaper solution exists it must satisfy Lemma \ref{lemma:1}. Thus if we get an UNSAT solution from the SAT solver, we know that a cheaper solution cannot exist and our previous assignment is optimal.

Fig. \ref{fig:brute} shows how this heuristic greatly reduces computation time for problems. Both a Naive brute-force approach and Algorithm \ref{alg:heur} were made to solve 100 problems with a large number of feasible alternatives. The heuristic based approach is able to handle larger problems relatively well compared to the naive SAT approach which quickly becomes too slow. One of the nice features of incremental optimization methods with SAT is that we are able to quickly obtain solutions which are sub-optimal but feasible, which is useful in real world scenarios when having an effective solution quickly is more important than obtaining the optimal solution eventually.

\begin{figure}[!b]
    \centering
    \includegraphics[width=8cm]{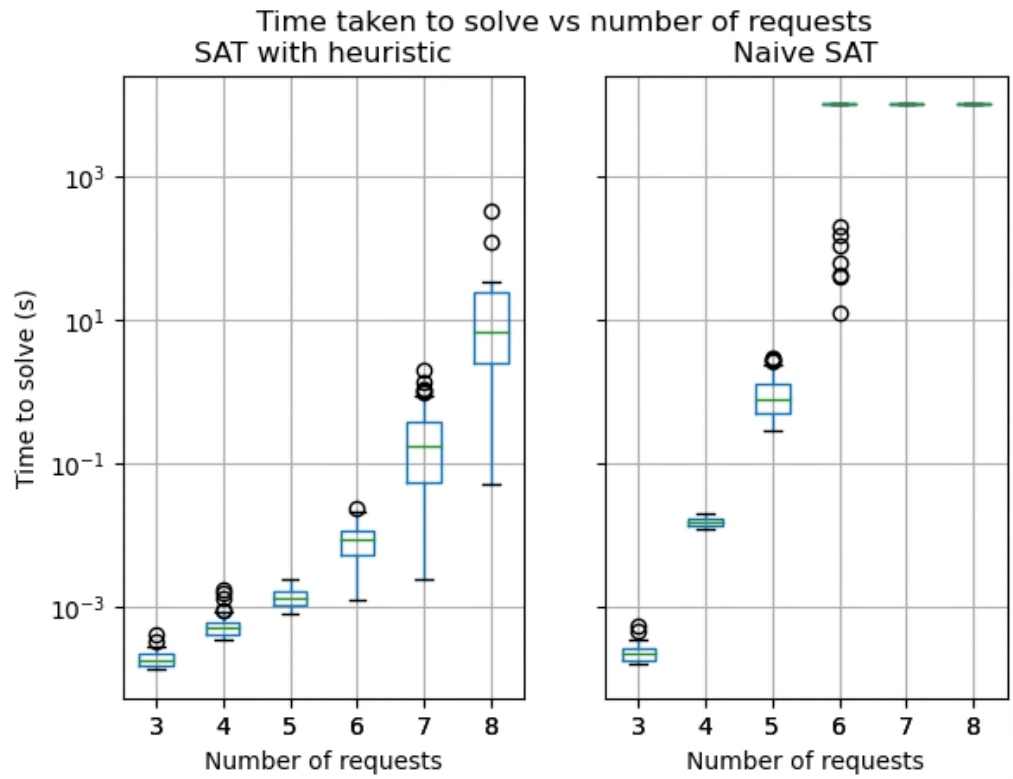}
    \caption{Box-plot of Naive SAT based search vs heuristic driven search. Both solvers were given 100 hard optimization problems of different sizes. Benchmark was run on a 24-core GCP instance with code written in Rust. Each request has 10 alternatives.}
    \label{fig:brute}
\end{figure}

\begin{algorithm}
\caption{SAT Cost Optimizer with Assignment Heuristic}\label{alg:heur}
\begin{algorithmic}
\Require An incremental SAT solver
\State $literals \gets$ Generate SAT literals from requests
\State $clauses \gets \{\} $
\State $best\_cost \gets \infty$
\State $best\_solution \gets \emptyset$
\State $OPEN \gets \{clauses\}$
\While{$OPEN$ is not empty}
\State $current\_clauses \gets$ pop an element in $OPEN$
\State $s \gets$ Call Incremental SAT with $current\_clauses$
\If{$s$ is UNSAT}
    \State continue.
\EndIf
\If{$cost(s) < best\_cost$}
\State $best\_cost \gets cost(s)$
\State $best\_solution \gets s$
\State $new\_clause \gets ()$
\State \textit{// Create clause generated by Lemma \ref{lemma:1}}
\ForAll{$x_{ij}$ in s}
    \ForAll{$cost(x_{ik}) < cost(x_{ij})$}
        \State $new\_clause \gets new\_clause \lor x_{ik}$
    \EndFor
\EndFor
\State  $new\_clauses \gets clauses \land new\_clause$
\State  $new\_clauses \gets new\_clauses \land \neg s$
\Else
\State  $new\_clauses \gets clauses \land \neg s$
\EndIf
\State $OPEN \gets new\_clauses$
\EndWhile
\end{algorithmic}
\end{algorithm}

\subsection{Alternate Solution to Optimization via a Greedy Conflict Driven Approach} \label{sec:alt_soln}

We can also find an optimal solution using a greedy search (see Algorithm \ref{alg:greedy}). The approach relies on greedily picking the lowest cost alternatives for each reservation, then when a conflict arises we branch the search based on the conflict. For instance if $\alpha_{m,n}$ and $\alpha_{j,k}$ conflict with each other we branch on the fact that either $\alpha_{m,n}$ must belong to the solution and hence eliminate $\alpha_{j,k}$, or $\alpha_{j,k}$ must belong to the solution and hence eliminate $\alpha_{m,n}$, or neither belong to the solution so both should be eliminated.

This gives us a branching factor of 3 where the frequency of branching comes from the number of conflicting alternatives. Therefore the more conflicts there are, the longer the greedy search will take. Additionally, if a solution is unsatisfied, the optimizer will examine every possible branch before it terminates. This combinatorial approach will not scale well with an increasing number of conflicting requests and alternatives.

\begin{algorithm}
\caption{Greedy Conflict-Driven Optimizer}\label{alg:greedy}
\begin{algorithmic}

\State $B_0\gets ()$
\State $S_0 = $ lowest cost assignment $a_{i,j}$ $\forall$ requests $R_i$
\State $OPEN \gets \{(S_0, B_0)\}$
\While{$OPEN$ is not empty}
\State $(solution, B_0) \gets$ pop lowest cost assignment in $OPEN$. 
\State check for conflicts in $solution$
\If{no conflicts found}
    \State \textbf{return} $solution$
\EndIf
\ForAll{$c$ in conflicts}
\State $(\alpha_{i,j}, \alpha_{k,m}) = c$ 
\State $B_1 \gets \neg \alpha_{i,j} \land B_0$
\State $S_1 = $ lowest cost assignment $a_{i,j}$ $\forall$ requests $R_i$ such that $B_1$ is satisfied
\State $B_2 \gets \neg \alpha_{k,m} \land B_0$
\State $S_2 = $ lowest cost assignment $a_{i,j}$ $\forall$ requests $R_i$ such that $B_2$ is satisfied
\State $B_3 \gets \neg \alpha_{i,j} \land \neg \alpha_{k,m} \land B_0$
\State $S_3 = $ lowest cost assignment $a_{i,j}$ $\forall$ requests $R_i$ such that $B_3$ is satisfied
\State $OPEN \gets OPEN \cup (S_1, B_1) \cup (S_2, B_2) \cup (S_3, B_3) $
\EndFor

\EndWhile
\end{algorithmic}
\end{algorithm}

\subsection{Understanding the weaknesses of each algorithm}
\begin{figure}[!b]
    \centering
    \includegraphics[width=8cm]{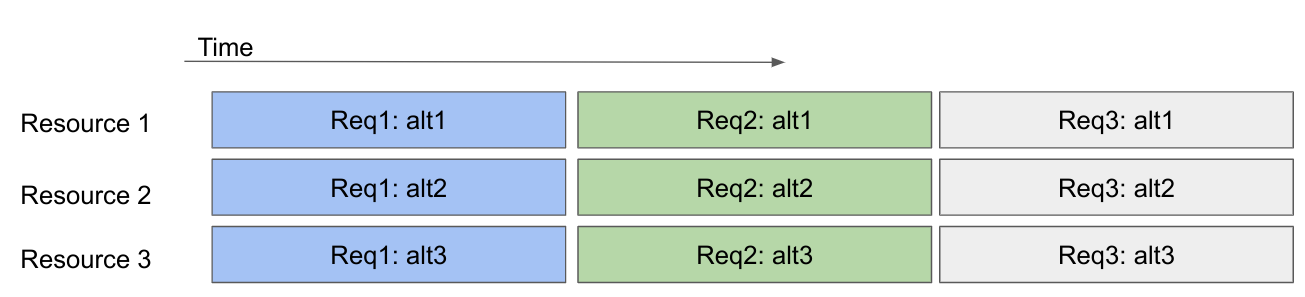}
    \caption{Example of a scenario which elicits a worst case response in the SAT optimization process but is quickly solved by the greedy process in $O(n)$}
    \label{fig:satdevil}
\end{figure}

\begin{figure}
    \centering
    \includegraphics[width=8cm]{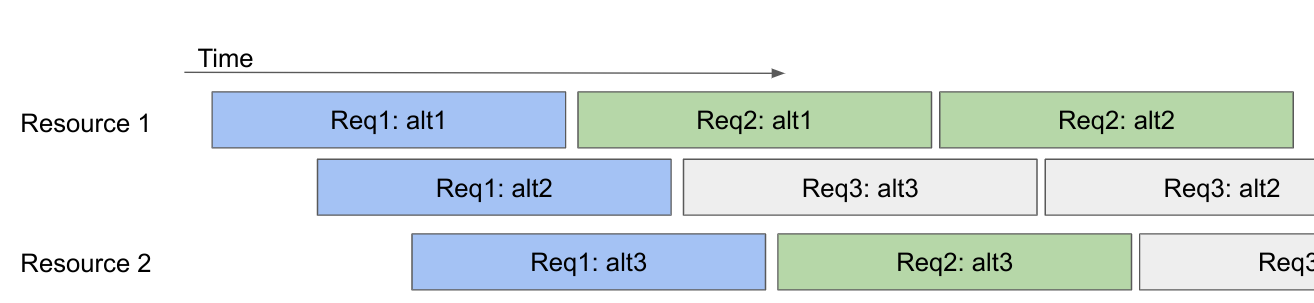}
    \caption{Example of a scenario which elicits a worst case response in the Greedy process but is quickly solved by the SAT optimization process. Assume that alt3 has lowest cost.}
    \label{fig:greedydevil}
\end{figure}

Algorithm \ref{alg:heur} and Algorithm \ref{alg:greedy} are interesting duals of each other when we consider the worst case performance of each algorithm. Fig. \ref{fig:satdevil} shows an adversarial example for Algorithm \ref{alg:heur}. Here, the solver has many alternatives to choose from. Depending on where the first iteration lands, the worst case scenario is that the SAT solver starts from the highest cost request and slowly works its way down through every possible feasible alternative. In contrast, Algorithm \ref{alg:greedy} would find the solution at its first search node.

Fig. \ref{fig:greedydevil} shows an example which would make the greedy algorithm (Algorithm \ref{alg:greedy}) have trouble producing a solution since the overlapping times for the alternatives create numerous conflicts and therefore many branches.

In an extreme scenario where no feasible assignments can be found to simultaneously satisfy all requests, the greedy algorithm would end up doing an exhaustive search, branching at every conflict before determining that no solution exists. On the other hand, the SAT solver in Algorithm \ref{alg:heur} would return UNSAT in the very first iteration.

It is also possible to construct an adversarial example with a feasible solution. One could create a problem where the only viable solutions have high costs, and then sprinkle in a bunch of lower cost alternatives that all lead to irreconcilable conflicts. This would cause the greedy algorithm to examine the lower cost alternatives first, only to discover that they are infeasible. The SAT based solver on the other hand would quickly identify a feasible solution.

The incremental nature of Algorithm \ref{alg:greedy} makes it appealing for ad hoc request scenarios because it can rapidly find a feasible solution and then converge towards the optimal solution with however much time the overall system is willing to budget. When the multi-robot system needs to handle unknown requests coming in at unpredictable times, having any viable solution within a time frame that keeps the overall system responsive is more important than eventually finding the optimal solution. In our benchmarks, we found that the SAT method arrives at the first feasible (but sub-optimal) solution within less than a second for 40 requests with 40 alternatives. Fig. \ref{fig:ttfs} shows the time to first solution of the SAT solver on a Thinkpad P50 with 64GB RAM and a 16-core 11th Gen Intel(R) Core(TM) i7-11850H @ 2.50GHz. This shows that the algorithm is well suited for ad hoc requests.

\begin{figure}[!b]
    \centering
    \includegraphics[width=8cm]{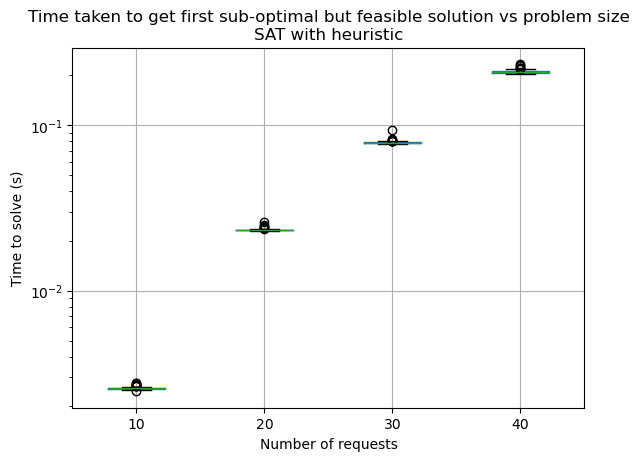}
    \caption{Box-plot of time taken for SAT with heuristics to generate first feasible but suboptimal solution. For a request size of $n$, there are $n$ alternatives. I.E for 40 requests there are 40 alternatives each.}
    \label{fig:ttfs}
\end{figure}

\subsection{Experimental Benchmarks}

We created multiple benchmark sets based on the scenarios described in the section \ref{sec:alt_soln}. They were run single-threaded on a 24-core AMD EPYC 7B12 processor with 96GB of RAM running in Google Cloud. The algorithms were implemented in Rust \cite{matsakis2014rust} and the SAT solver used was the VariSAT solver \cite{Varisat}. The first set of benchmarks were based on a scenario where there are many feasible solutions. This results in a more optimization heavy workload. In such instances, the greedy method reaches optimality faster than repeated application of SAT with heuristics. As shown in Fig. \ref{fig:greedy-label}, as the problem size increases, the performance of the SAT method with heuristics falls far behind the greedy method for problems with simple solutions. Conversely, as the number of conflicts increases, the greedy method fairs worse even with a small number of conflicts (see Fig. \ref{fig:by-conflict}) while the SAT based methods are able to solve large instances in a reasonable amount of time. Fig. \ref{fig:num-requests} shows that the performance of the Greedy Method significantly degrades much faster than SAT with number of conflicts.

\begin{figure}[!t]
    \centering
    \includegraphics[width=8cm, trim=0.5cm 0 0 0, clip]{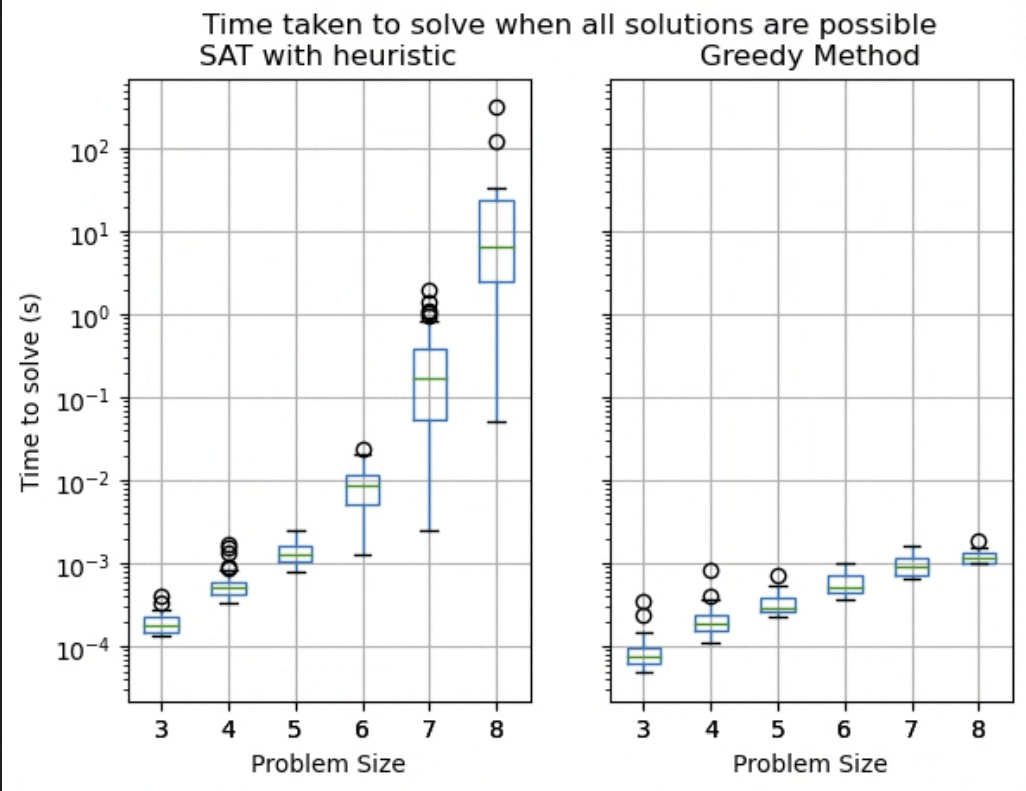}
    \caption{Box plot showing performance comparison when many options exist and no conflicts exist. Problem size $n$ refers to number of requests. Each request also has $n$ alternatives }
    \label{fig:greedy-label}
\end{figure}

\begin{figure}[!b]
    \centering
    \includegraphics[width=8cm, trim=0 0 0.5cm 0, clip]{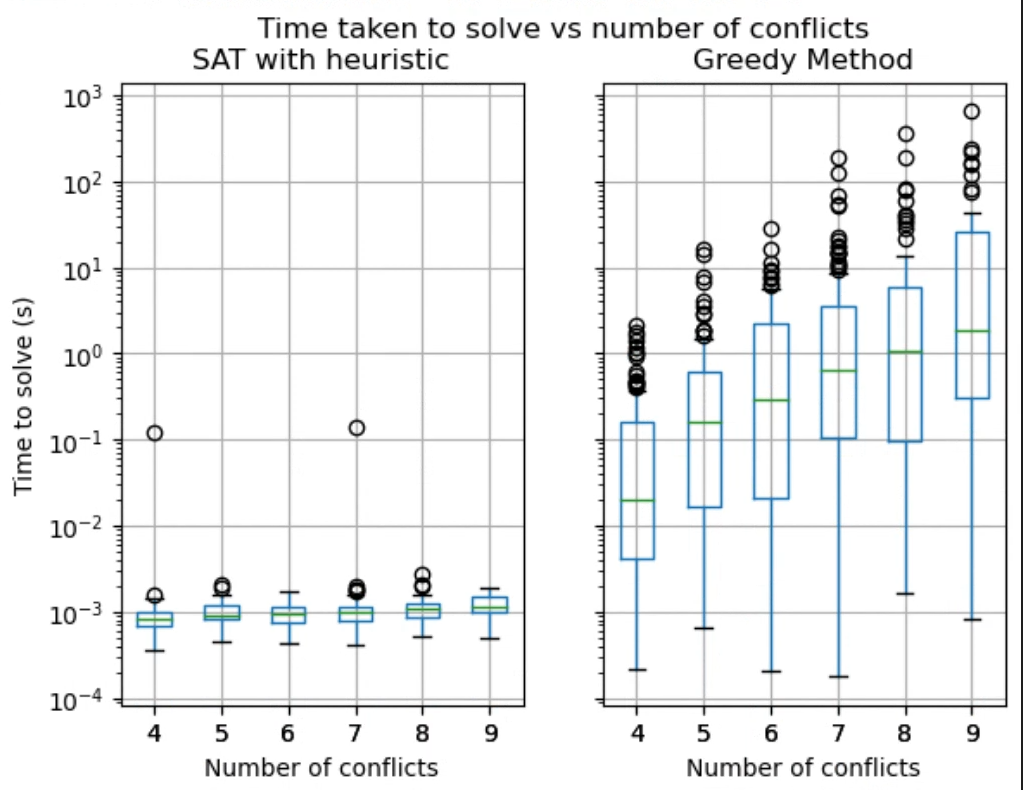}
    \caption{Box plot of performance vs number of conflicts. Here there were 10 requests submitted each with 5 alternatives.}
    \label{fig:by-conflict}
\end{figure}

\begin{figure}[!t]
    \centering
    \includegraphics[width=8cm]{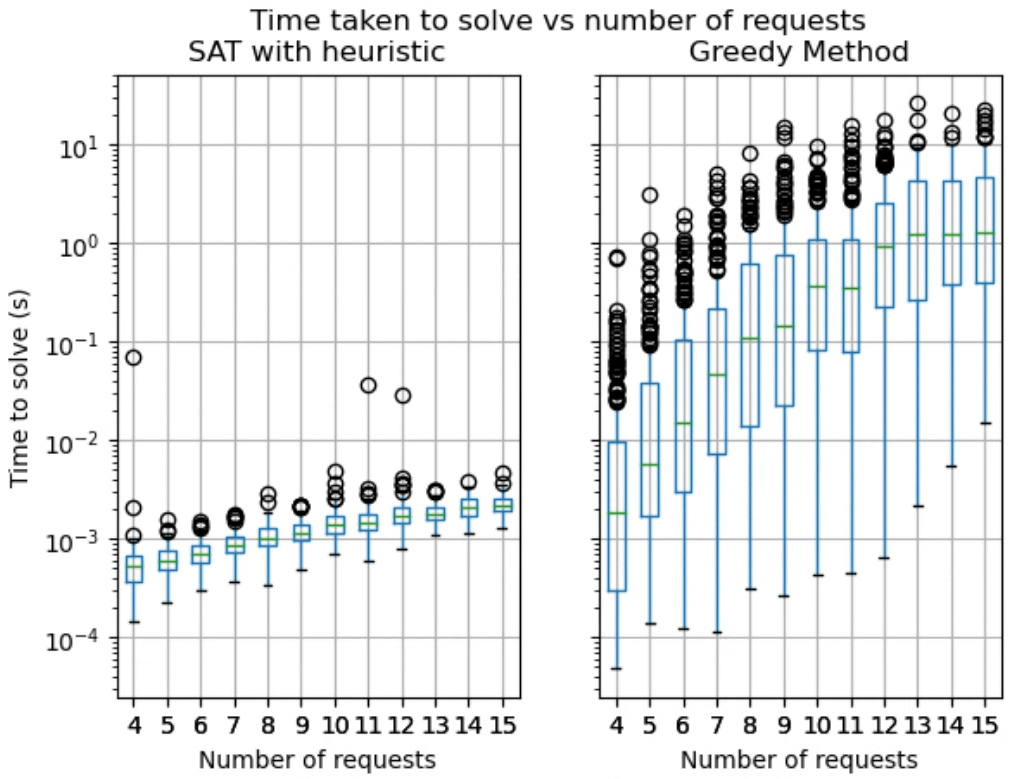}
    \caption{Box plot of performance comparison vs requests in scenario with fixed number of conflicts. Here there were 10 alternatives per option.}
    \label{fig:num-requests}
\end{figure}

\section {Extending the system to encode time ranges}\label{sec:flexitime} 

In a reservation system for robots, it is important to handle scenarios that would require robots to take turns using the same resource. The simple fixed time scheme does not lend itself to solving such problems. For example if two robots each place a request to use the same resource at the same time with no other alternatives, then the fixed time scheme would simply return UNSAT.

This means that we should support a time window in which each reservation alternative may begin. For this, there are two possible solutions. The first is to use the formulation in section \ref{discrete_formulation} and generate a sequence of alternatives for each resource at discrete time intervals. The second option would be to encode ordering into the SAT formulation. The latter has the advantage that a sub-optimal solution can be found very quickly, and an unsatisfiable scenario can be found more quickly thanks to the fact that the starting time range not affecting the formula length (see Section \ref{sec:comparison} for more details). Additionally, there is no need to tune the time resolution to which we discretize.

\subsection {CNF encoding of scheduling constraints to create a model}

We can extend our previous formulation to add support for varying start times. Let's revise our original problem statement.

\subsubsection{Definitions}
\begin{itemize}
    \item Each request can be defined as $R_i$.
    \item A request has $n$ alternatives $\beta_{i,j} = (s_{i,j},l_{i,j},r_{i,j},d_{i,j}, c_{i,j})$ where $ s_{i,j},l_{i,j},d_{i,j} \in \mathbb{R}$. Each alternative is made of an earliest start time $s$, a latest start time $l$, a resource $r$, a cost $c$, and a duration $d$.
    \item An alternative $\beta_{i,j}$ cannot co-exist with alternative $\beta_{k,m}$ if it shares the same resource (i.e. $r_{i,j} = r_{k,m}$) and the earliest end time of $\beta_{i,j}$ (given by  $s_{i,j} + d_{i,j}$) is greater than the latest start time of $\beta_{k,m}$ (given by $l_{k,m}$).
\end{itemize}
The CNF decomposition is similar to the previous decomposition in \eqref{c1} and \eqref{c2}, except we introduce a new set of variables $X_{ijkm}$ to impose ordering. We say $X_{ijkm}$ is true if $\beta_{i,j}$ starts after $\beta_{k,m}$. Based on this definition we will imply a strict total order on all $X_{ijkm}$ within a given resource. Strict total orders generally have four properties: irreflexivity, anti-symmetry, transitivity and connectedness \cite{2011aise}. We do not need to consider irreflexivity as that is something that a SAT solver already encodes. To encode anti-symmetry we start with the clause \eqref{ord_const}:
\begin{equation}
    (x_{ij} \land x_{km} \land X_{ijkm}) \implies \neg X_{kmij} \label{ord_const}
\end{equation}
We can transform \eqref{ord_const} into:
\begin{equation}
   \neg((x_{ij} \land x_{km} \land X_{ijkm})) \lor \neg X_{kmij} 
\end{equation}
Which by De'Morgan's Law simplifies to:
\begin{equation}
    \neg x_{ij} \lor \neg x_{km} \lor \neg X_{ijkm} \lor \neg X_{kmij} \label{ord_cnf}
\end{equation}
We also want to encode transitivity so we assume:
\begin{equation}
  ( X_{ijkm} \land X_{kmnl}) \implies  X_{ijnl} 
\end{equation}
Which gets encoded into
\begin{equation}
  \neg X_{ijkm} \lor \neg X_{kmnl} \lor  X_{ijnl} \label{ord_end}
\end{equation}

We only need connectedness if both $x_{ij}$ and $x_{km}$ are within the same resource and both $x_{ij}$ and $x_{km}$ are true. This can be given by the boolean formula below: 
\begin{equation}
   (x_{ij} \land x_{km}) \implies  ( X_{ijkm} \lor X_{kmij})
\end{equation}
Which simplifies to: 
\begin{equation}
   (\neg x_{ij} \lor \neg  x_{km} \lor X_{ijkm} \lor X_{kmij})
\end{equation}

Finally we add all known literals of $ X_{ijkm}$. The cases where we know $ X_{ijkm}$ are
\begin{itemize}
    \item Iff $\beta_{i,j}$ cannot be scheduled after $\beta_{k,m}$ (i.e. $l_{i,j} < s_{k,m} + d_{k,m} \land s_{i,j} + d_{i,j} > s_{k,m}$), then we add the literal $\neg X_{ijkm}$.
\end{itemize}
  Note that this set of constraints only provides us with a necessary condition. They are not sufficient to prove that an assignment is valid. One still needs to traverse the order determined by the SAT solver and find if the order is valid. In particular our SAT encoding does not take duration into account outside of the indefinite case. Here we can use learning to identify new clauses as defined by Algorithm \ref{alg:clauselearn}. These new clauses will allow us to constrain the output of the SAT solver to only include assignments that satisfy the start time constraint of each alternative while taking their durations into account.

  The key idea in Algorithm \ref{alg:clauselearn} is that certain combinations of assignments would violate the latest start time constraints of one or more alternatives. Using the $X_{ijkm}$ values, we topologically sort the alternatives that would use the same resource within a candidate solution, and then we calculate what the concrete start time values would be for that combination of assignments. The candidate solution becomes invalid if any of the assignments has been pushed out beyond its latest start time constraint. In this case we retrace which assignments were responsible for causing the unacceptable delay and mark that combination as illegal. This allows us to further limit the search space such that we converge on a valid solution.

\begin{algorithm}[!t]
\caption{Conflict detection and clause learning for Resource Ordering}\label{alg:clauselearn}
\begin{algorithmic}

\State let $G(V,E)$ be the graph formed where $V$ is the set of all the $x_{ij}$ assigned to be $true$ and in resource $r*$, and  $E$ is the set of all true $X_{ijkm}$.
\State $sorted\_assignment \gets topological\_sort(G)$ 
\State schedule $\gets \{\}$
\State banned\_clauses $\gets \{\}$
\ForAll{$i$ in sorted\_assignment.len()}
  \State $s_i =$ sorted\_assignment[i].earliest\_start 
  \State $l_i =$ sorted\_assignment[i].latest\_start 
  \If{$s_i >$ schedule.last.end\_time() }
    
    \State schedule.insert\_at($s_i$, sorted\_assignment[i])
  \ElsIf{$l_i >$ schedule.last.end\_time()}
    \State $t \gets$ schedule.last.end\_time()
    \State schedule.insert\_at($t$, sorted\_assignment[i])
  \Else
    \State banned\_clause $\gets ()$
    \State $j =$ schedule.len()-1
    \While {schedule[j].start\_time() == schedule[j-1].end\_time()} 
      \State banned\_clause $\lor= \neg$ assignment[i]
      \State $j \gets j-1$
    \EndWhile
    \State banned\_clauses.push(banned\_clause)
  \EndIf
\EndFor
\State \Return banned\_clauses
\end{algorithmic}
\end{algorithm}

\subsection {Comparing Discretization vs Continuous representation}\label{sec:comparison}

An alternative to the formulation described in equations \eqref{ord_cnf}-\eqref{ord_end} is to discretize each order and check for satisfiability using the conditions described in \eqref{c3}. The first thing that merits discussion is the order of growth of clauses. In the case of discretization, we find that the number of literals is proportional to the interval of each alternative ($t$) multiplied by the number of resources to consider ($n$). The order of growth of the clauses is then $O(n^2t^2)$. On the other hand, the continuous representation is independent of $t$. However the transitivity constraints result in $O(n^2)$ literals and $O(n^3)$ clauses. Discretization leads to the added problem that assigned reservations may have gaps between them. In practice this can be fixed with a post processing pass.

Fig. \ref{fig: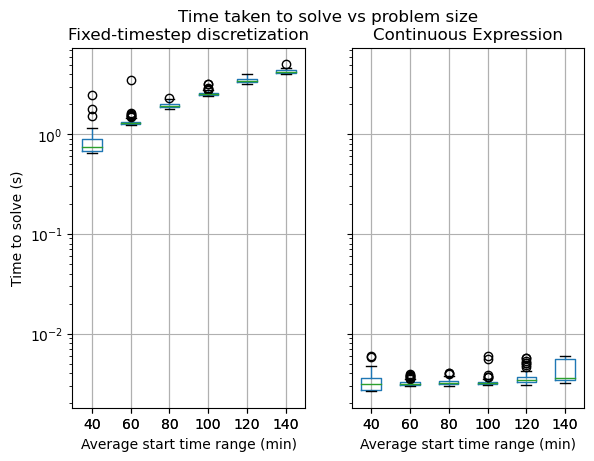} shows the start time range has a significant impact on the time to solve a fixed-timestep discretization based solver, whereas the continuous solver is able to find solutions several orders of magnitudes faster and is virtually unaffected by the start time range.

\begin{figure}[!t]
    \centering
    \includegraphics[width=8cm]{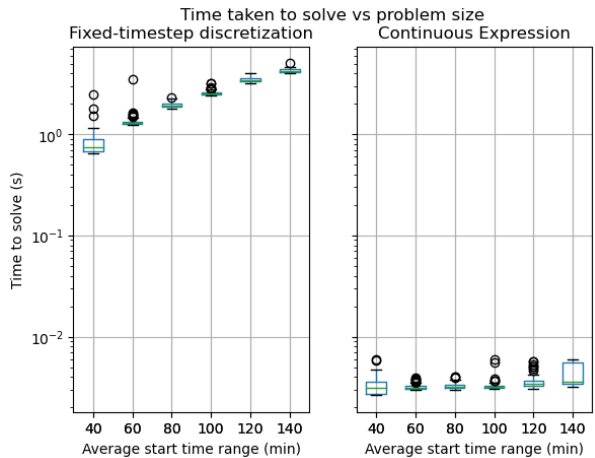}
    \caption{Discrete vs continuous expression: 5 requests with 5 alternatives were given. The discretiation time step was 10 minutes. The algorithms were run till the first feasible alternative is found}
    \label{fig:discrete_vs_continuous.png}
\end{figure}

Note that for cost optimization Algorithm \ref{alg:heur} still applies with some limitations. The cost cannot depend on time when using the continuous formulation, whereas the discretized time representation can encode different costs into alternatives that start at different times.

\section {Validation\label{sec:val}}


\begin{figure}[!b]
    \centering
    \includegraphics[width=8cm]{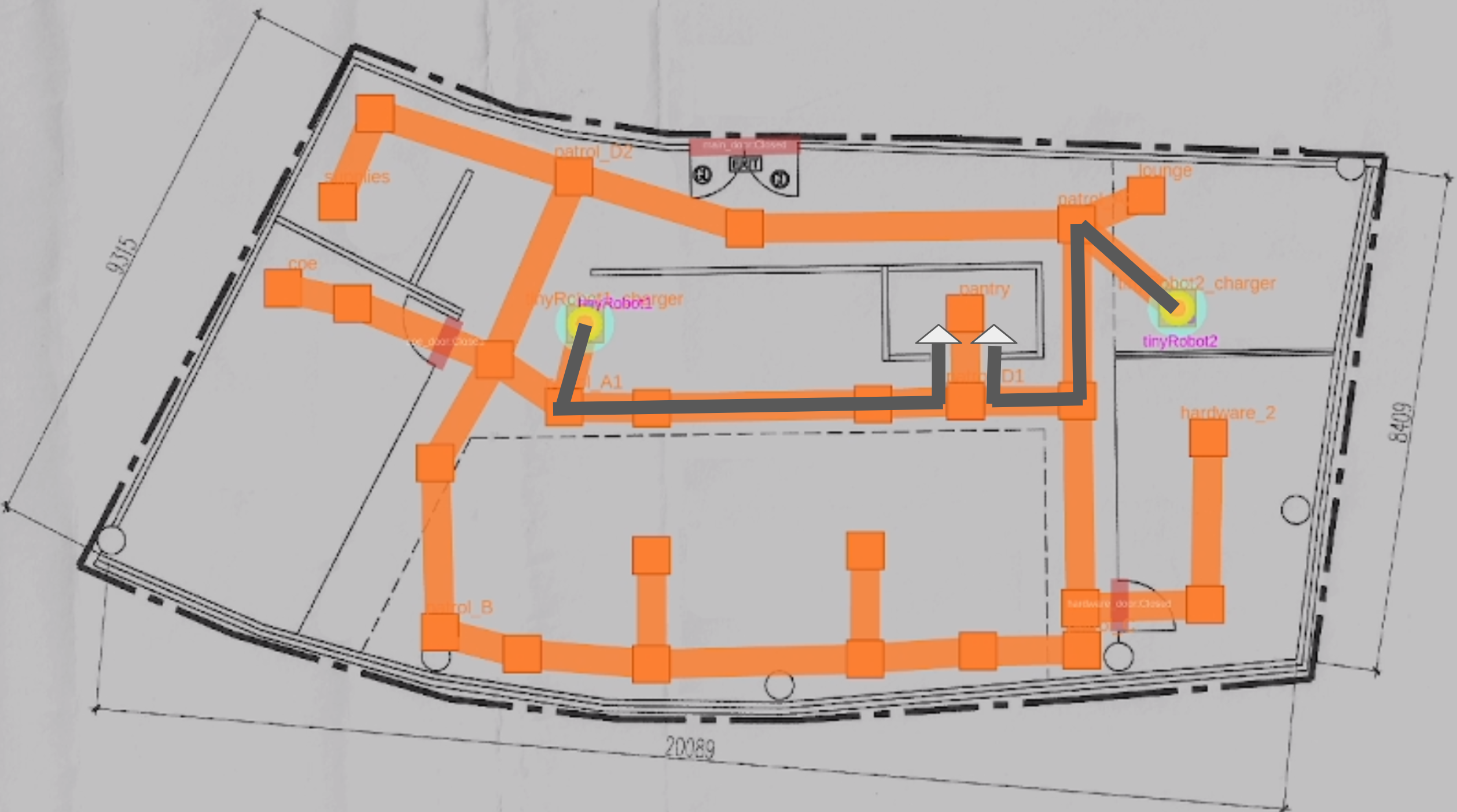}
    \caption{Example of simulated world. The arrows show requested trajectory in an example scenario. The reservation system would re-order the requests such that no two robots would conflict with each other for use of the pantry.}
    \label{fig:sim-world}
\end{figure}

We ran several scenarios in Gazebo \cite{Gazebo} for 24 hours in simulated time to check that no deadlocks occur. The first scenario involved 2 robots periodically making a request to the same location to simulate ``charging''. Fig. \ref{fig:sim-world} shows an example of conflicting requests made. We tried similar simulations with 3 robots in the same world. The system would automatically re-order the robots such that no conflict occurs. We use Open-RMF's office world from the $\texttt{rmf\_demos}$ package to validate our algorithm's behaviour. We also had scenarios where random requests were made for robots to go to different waypoints. The algorithms and test harnesses were implemented in ROS 2 \cite{ros2} with Rust \cite{matsakis2014rust} and can be found on GitHub\footnote{\url{https://github.com/open-rmf/rmf_reservation}}.

Additionally, we tested our algorithm for 30 minutes on two Smorphi robots\footnote{\url{https://www.wefaarobotics.com}} shown in Fig. \ref{fig:physical}. In this test, the two robots repeatedly requested the same spot. The robot would request the resource (in this case the position in the center of the L bend), claim its assignment from the reservation system, and then proceed to the position when permitted. In order to ensure that no other robot can move in to the same spot, the claim held by the robot must be released before any other claimants are permitted to go. The system releases the claim once the robot moves away from the claimed position. The reservation system would then give permission to the next claim. This process of requesting, claiming, waiting for permission, proceeding, and then returning to their start points cycled for 30 minutes without any deadlocks.

\begin{figure}[!t]
    \centering
    \includegraphics[width=8cm]{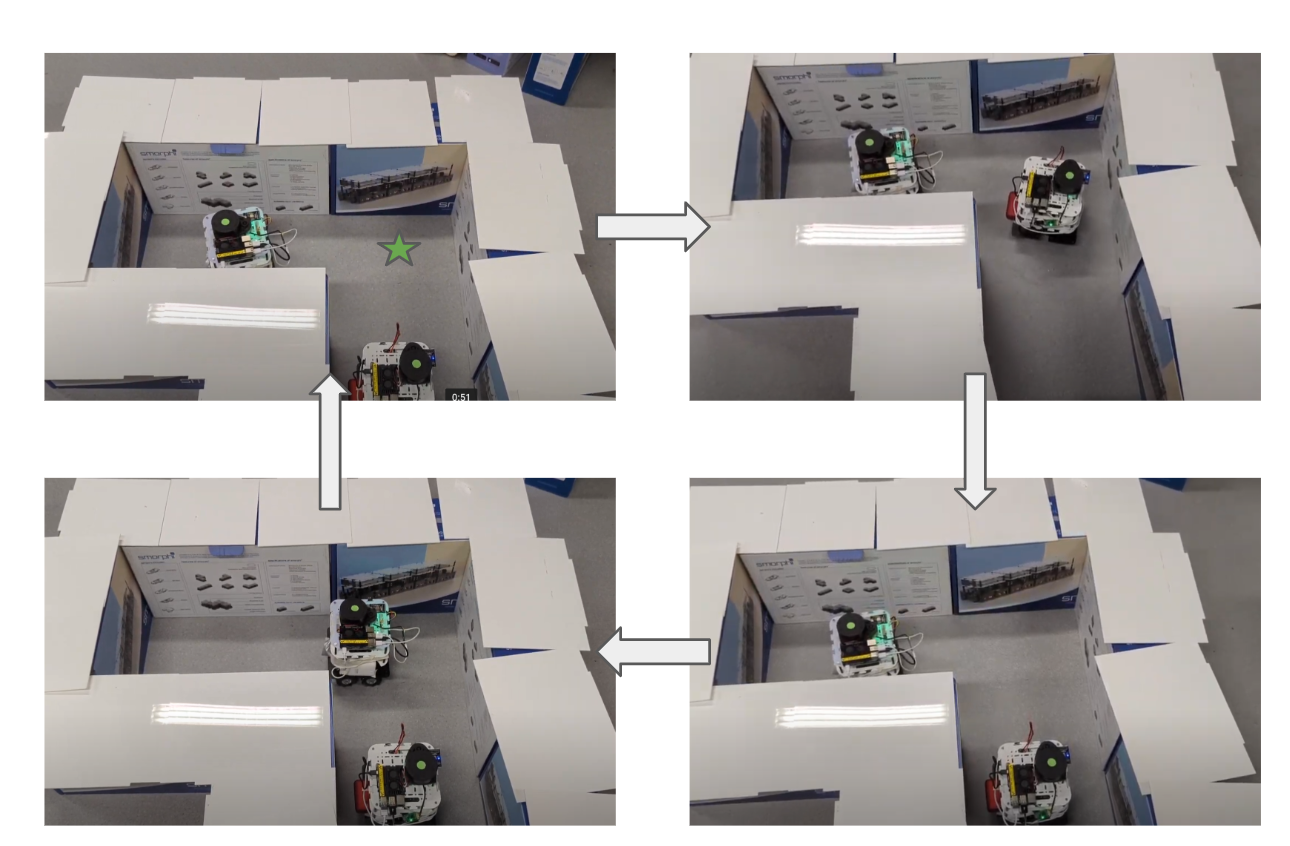}
    \caption{Example of physical demo using two Smorphi robots. The green star shows the contended resource. 
    }
    \label{fig:physical}
\end{figure}

\section{Conclusions and Future Work\label{sec:conclusion}}
This paper proposed a novel algorithm for an ad-hoc reservation system that optimizes the assignment of requested resources while accounting for the individual costs to the requesters. We race a SAT solver against a Greedy solver since the performance profile of each method is complementary. Additionally we have shown how to encode order into the boolean satisfiability problem and shown how the constrained assignment problem can be optimized via repeated application of a SAT solver.

Our benchmarks, simulations and real world demonstration show the feasibility of using the SAT based algorithms for resource sharing. Of particular importance are the heuristics developed for assignment and the encoding developed for total ordering. 

As future work, we would like to examine how the costs of assignments produced by the algorithm in Section \ref{sec:flexitime} can depend on time or some other arbitrary function or are we forced to use discretization to account for the cost of delaying an assignment. We also only look at discretization at fixed time steps, however we may be able to minimize computation time if were smart about the way we sampled possible assignments. Other interesting ideas include extending the CNF formulations here to more complex planning schemes where one reservation might depend on which of the previous reservations was awarded. Finally are there ways of automatically discovering theorems that can rapidly guide optimizations.

\addtolength{\textheight}{-12cm}   




\bibliographystyle{IEEEtran}
\bibliography{IEEEabrv,references}

\begin{thebibliography}{10}
\providecommand{\url}[1]{#1}
\csname url@rmstyle\endcsname
\providecommand{\newblock}{\relax}
\providecommand{\bibinfo}[2]{#2}
\providecommand\BIBentrySTDinterwordspacing{\spaceskip=0pt\relax}
\providecommand\BIBentryALTinterwordstretchfactor{4}
\providecommand\BIBentryALTinterwordspacing{\spaceskip=\fontdimen2\font plus
\BIBentryALTinterwordstretchfactor\fontdimen3\font minus \fontdimen4\font\relax}
\providecommand\BIBforeignlanguage[2]{{%
\expandafter\ifx\csname l@#1\endcsname\relax
\typeout{** WARNING: IEEEtran.bst: No hyphenation pattern has been}%
\typeout{** loaded for the language `#1'. Using the pattern for}%
\typeout{** the default language instead.}%
\else
\language=\csname l@#1\endcsname
\fi
#2}}

\bibitem{9476870}
M.~Dharmadhikari, H.~Nguyen, F.~Mascarich, N.~Khedekar, and K.~Alexis, ``Autonomous cave exploration using aerial robots,'' in \emph{2021 International Conference on Unmanned Aircraft Systems (ICUAS)}, 2021, pp. 942--949.

\bibitem{SHARON201540}
\BIBentryALTinterwordspacing
G.~Sharon, R.~Stern, A.~Felner, and N.~R. Sturtevant, ``Conflict-based search for optimal multi-agent pathfinding,'' \emph{Artificial Intelligence}, vol. 219, pp. 40--66, 2015. [Online]. Available: \url{https://www.sciencedirect.com/science/article/pii/S0004370214001386}
\BIBentrySTDinterwordspacing

\bibitem{kottinger2022conflict}
J.~Kottinger, S.~Almagor, and M.~Lahijanian, ``Conflict-based search for multi-robot motion planning with kinodynamic constraints,'' in \emph{2022 IEEE/RSJ International Conference on Intelligent Robots and Systems (IROS)}.\hskip 1em plus 0.5em minus 0.4em\relax IEEE, 2022, pp. 13\,494--13\,499.

\bibitem{Gerkey02}
B.~Gerkey and M.~Mataric, ``Sold!: auction methods for multirobot coordination,'' \emph{IEEE Transactions on Robotics and Automation}, vol.~18, no.~5, pp. 758--768, 2002.

\bibitem{OpenRMF}
``Open-rmf,'' \url{https://github.com/open-rmf/rmf}.

\bibitem{stern2019multi}
R.~Stern, N.~Sturtevant, A.~Felner, S.~Koenig, H.~Ma, T.~Walker, J.~Li, D.~Atzmon, L.~Cohen, T.~Kumar, \emph{et~al.}, ``Multi-agent pathfinding: Definitions, variants, and benchmarks,'' in \emph{Proceedings of the International Symposium on Combinatorial Search}, vol.~10, no.~1, 2019, pp. 151--158.

\bibitem{NamNPHard}
C.~Nam and D.~A. Shell, ``Assignment algorithms for modeling resource contention in multirobot task allocation,'' \emph{IEEE Transactions on Automation Science and Engineering}, vol.~12, no.~3, pp. 889--900, 2015.

\bibitem{yu2020review}
S.~Yu, C.~Fu, A.~K. Gostar, and M.~Hu, ``A review on map-merging methods for typical map types in multiple-ground-robot slam solutions,'' \emph{Sensors}, vol.~20, no.~23, p. 6988, 2020.

\bibitem{munkres1957algorithms}
J.~Munkres, ``Algorithms for the assignment and transportation problems,'' \emph{Journal of the society for industrial and applied mathematics}, vol.~5, no.~1, pp. 32--38, 1957.

\bibitem{cook2014pursuit}
\BIBentryALTinterwordspacing
W.~Cook, \emph{In Pursuit of the Traveling Salesman: Mathematics at the Limits of Computation}.\hskip 1em plus 0.5em minus 0.4em\relax Princeton University Press, 2014. [Online]. Available: \url{https://books.google.com.sg/books?id=UmuYDwAAQBAJ}
\BIBentrySTDinterwordspacing

\bibitem{gurobi}
\BIBentryALTinterwordspacing
{Gurobi Optimization, LLC}, ``{Gurobi Optimizer Reference Manual},'' 2023. [Online]. Available: \url{https://www.gurobi.com}
\BIBentrySTDinterwordspacing

\bibitem{cplex2009v12}
I.~I. Cplex, ``V12. 1: User’s manual for cplex,'' \emph{International Business Machines Corporation}, vol.~46, no.~53, p. 157, 2009.

\bibitem{cpsatlp_cpaior_masterclass}
\BIBentryALTinterwordspacing
CPAIOR, ``Cpaior 2020 master class: Constraint programming,'' Sept. 2020. [Online]. Available: \url{https://youtu.be/lmy1ddn4cyw}
\BIBentrySTDinterwordspacing

\bibitem{balyo2023proceedings}
T.~Balyo, M.~Heule, M.~Iser, M.~J{\"a}rvisalo, and M.~Suda, ``Proceedings of sat competition 2023: Solver, benchmark and proof checker descriptions,'' 2023.

\bibitem{cdcl1}
J.~Marques-Silva and K.~Sakallah, ``Grasp: a search algorithm for propositional satisfiability,'' \emph{IEEE Transactions on Computers}, vol.~48, no.~5, pp. 506--521, 1999.

\bibitem{zhang2001efficient}
L.~Zhang, C.~F. Madigan, M.~H. Moskewicz, and S.~Malik, ``Efficient conflict driven learning in a boolean satisfiability solver,'' in \emph{IEEE/ACM International Conference on Computer Aided Design. ICCAD 2001. IEEE/ACM Digest of Technical Papers (Cat. No. 01CH37281)}.\hskip 1em plus 0.5em minus 0.4em\relax IEEE, 2001, pp. 279--285.

\bibitem{MRTASAT}
F.~Imeson and S.~L. Smith, ``Multi-robot task planning and sequencing using the sat-tsp language,'' in \emph{2015 IEEE International Conference on Robotics and Automation (ICRA)}, 2015, pp. 5397--5402.

\bibitem{heras2008minimaxsat}
F.~Heras, J.~Larrosa, and A.~Oliveras, ``Minimaxsat: An efficient weighted max-sat solver,'' \emph{Journal of Artificial Intelligence Research}, vol.~31, pp. 1--32, 2008.

\bibitem{matsakis2014rust}
N.~D. Matsakis and F.~S. Klock~II, ``The rust language,'' in \emph{ACM SIGAda Ada Letters}, vol.~34, no.~3.\hskip 1em plus 0.5em minus 0.4em\relax ACM, 2014, pp. 103--104.

\bibitem{Varisat}
``Varisat,'' \url{https://github.com/jix/varisat}.

\bibitem{2011aise}
S.~Epp, \emph{AISE DISCRETE MATHEMATICS WITH APPLICATIONS.}\hskip 1em plus 0.5em minus 0.4em\relax Cengage Learning, 2011.

\bibitem{Gazebo}
N.~Koenig and A.~Howard, ``Design and use paradigms for gazebo, an open-source multi-robot simulator,'' in \emph{2004 IEEE/RSJ International Conference on Intelligent Robots and Systems (IROS) (IEEE Cat. No.04CH37566)}, vol.~3, 2004, pp. 2149--2154 vol.3.

\bibitem{ros2}
\BIBentryALTinterwordspacing
S.~Macenski, T.~Foote, B.~Gerkey, C.~Lalancette, and W.~Woodall, ``Robot operating system 2: Design, architecture, and uses in the wild,'' \emph{Science Robotics}, vol.~7, no.~66, p. eabm6074, 2022. [Online]. Available: \url{https://www.science.org/doi/abs/10.1126/scirobotics.abm6074}
\BIBentrySTDinterwordspacing

\end{thebibliography}

\end{document}